\begin{document}

\newcommand{\inner}[2]{\langle #1, #2 \rangle}
\title{On Adversarial Training And The 1 Nearest Neighbor Classifier}
%
%\titlerunning{Abbreviated paper title}
% If the paper title is too long for the running head, you can set
% an abbreviated paper title here
%
\author{Amir Hagai\inst{1}\orcidID{0009-0001-5453-6279} \and
Yair Weiss\inst{1}\orcidID{0000-0003-4643-5328}}

\institute{
Department of Computer Science, Hebrew University of Jerusalem \\
\email{\{amir.hagai,yair.weiss\}@mail.huji.ac.il}}
\maketitle              % typeset the header of the contribution
\begin{abstract}
The ability to fool deep learning classifiers with tiny perturbations of the input has lead to the development of adversarial training in which the loss with respect to adversarial examples is minimized in addition to the training examples. While adversarial training improves the robustness of the learned classifiers, the procedure is computationally expensive, sensitive to hyperparameters and may still leave the classifier vulnerable to other types of small perturbations. In this paper we 
compare the performance of adversarial training to that of the simple 1 Nearest Neighbor (1NN) classifier. We prove that under reasonable assumptions, the 1NN classifier will be robust to {\em any} small image perturbation of the training images. In experiments with 135 different binary image classification problems taken from CIFAR10, MNIST and Fashion-MNIST we find that  1NN outperforms TRADES (a powerful adversarial training algorithm) in terms of average adversarial accuracy. In additional experiments with 69 robust models taken from the current adversarial robustness leaderboard, we find that 1NN outperforms almost all of them in terms of robustness to perturbations that are only slightly different from those used during training. Taken together, our results suggest that modern adversarial training methods still fall short of the robustness of the simple 1NN classifier. our code can be found at \url{https://github.com/amirhagai/On-Adversarial-Training-And-The-1-Nearest-Neighbor-Classifier}
\keywords{Adversarial training}
\end{abstract}

\begin{figure}[htbp]
  \centering
  
  %\begin{minipage}{0.4\columnwidth}

   % \includegraphics[width=\linewidth]%{}
  %\end{minipage}
  
  % Second row with one figure
  \begin{minipage}{0.99\columnwidth}
  \includegraphics[width=\linewidth]{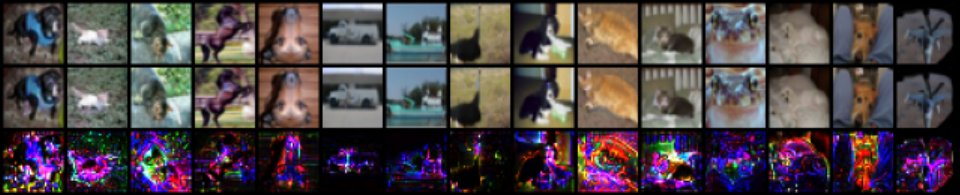}
    \end{minipage}
 % \begin{minipage}{0.8\columnwidth}
 % \includegraphics[width=\linewidth]{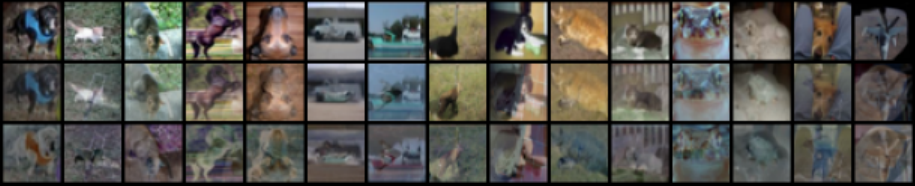}
 %   \end{minipage}

  \caption[]{Adversarial images for CIFAR10 training images for a CNN classifier obtained {\em with adversarial training}. First row is the original image, second row is the adversarial image and last row is the normalized difference. Even though the model was trained to be robust to any perturbation in which all pixel values are less than $8/255$, it can be easily fooled by imperceptible perturbations that are only slightly different from those that it saw during training. }
  \label{fig:three_figures}
\end{figure}

\section{Introduction}

Figure~\ref{fig:three_figures} illustrates the well-known phenomenon of adversarial examples\cite{szegedy2013intriguing}. The top row shows  CIFAR10 images that are correctly classified by a convolutional neural network, and the middle row shows nearly identical images that are misclassified by the same network. The bottom row shows the difference between the original images and the adversarial images, where the difference has been rescaled. 
This intriguing property \cite{szegedy2013intriguing} of modern neural networks has attracted significant interest in fields such as computer vision \cite{goodfellow2014explaining}, \cite{papernot2016transferability} \cite{carlini2017towards}, natural language processing \cite{zou2023universal}, and cybersecurity \cite{moradpoor2023threat}.

Algorithms for {\em adversarial training} (e.g. \cite{zhang2019theoretically,madry2017towards} ) try to address this lack of robustness of neural network classifiers by modifying the training procedure. At each iteration of training, the algorithms find adversarial examples of the current classifier and minimize a loss with respect to these adversarial examples along with the standard loss on the original training examples.
%Algorithm~\ref{alg:trades} describes the powerful adversarial algorithm TRADES~\cite{zhang2019theoretically} which won the 1st place out of approximately 2,000 submissions  in the NeurIPS 2018 Adversarial Vision Challenge. As we describe in the related work section, many state-of-the-art adversarial training methods are based on TRADES.

How well does adversarial training work? A commonly used metric is {\em adversarial accuracy} in which a classifier is said to correctly classify an image if it gives the correct output for all images in an $\epsilon$ ball around the image. Neural networks trained with standard training are highly vulnerable to small perturbations of their input so their adversarial accuracy is close to zero. For CIFAR10, adversarial training can increase the adversarial accuracy to above $60\%$ and with additional training data to above $70\%$ (e.g.~\cite{huang2022revisiting}, \cite{wang2023better}, \cite{peng2023robust}). 

Despite the impressive progress in adversarial accuracy when using adversarial training,  there are also several shortcomings. First, training using adversarial training is significantly more expensive, since an adversarial example needs to be found at each iteration of training. Second, the methods require several hyperparameters (e.g. the relative weights given to original examples vs. adversarial examples) and performance varies greatly with different choices. A third shortcoming is that adversarial training seems to mostly increase robustness to the specific types of perturbations that were used during training. 

Figure~\ref{fig:three_figures} illustrates this last shortcoming. The adversarial examples that are shown are for a {\em robust classifier}: the model was trained using adversarial training and we downloaded it from the RobustBench Adversarial Robustness leaderboard~\cite{croce2021robustbench}. When we test the model with the same perturbations that it was trained on, it is indeed quite robust.  However, when we slightly change the definition of allowed perturbations (see section 2 for the exact definition of allowed perturbations) then the adversarial accuracy is less than $5\%$. In other words for more than $95\%$ of the images, the classifier that was trained with adversarial training can be fooled with an imperceptible change of the input image. This illustrates the fact that "most adversarial training methods can only defend
against a specific attack that they are trained with. "\cite{nie2022diffusion}.

In this paper we seek to better understand the performance of adversarial training by comparing it to the simple 1 Nearest Neighbor (1NN) classifier.  We first formally define adversarial robustness and adversarial training and describe in detail one of the state-of-the-art algorithms for adversarial training: the TRADES~\cite{zhang2019theoretically} algorithm.  We then theoretically analyze the 1NN classifier and show that under reasonable assumptions, the 1NN classifier will be robust to {\em any} small image perturbation of the training images and will give high adversarial accuracy on test images as the number of training examples goes to infinity. We then perform experiments with 135 different binary image classification problems taken from CIFAR10, MNIST and Fashion-MNIST. We find that  1NN outperforms TRADES in terms of average adversarial accuracy with varied attackers. In additional experiments with 69  robust models taken from the RobustBench Adversarial Robustness Benchmark. 
We find that 1NN outperforms almost all of them in terms of
robustness to perturbations that are only slightly different from those
seen during training. Taken together, 
our results suggest that modern adversarial training methods still fall short of the robustness of the simple 1NN classifier.

\section{Adversarial Robustness and Adversarial Training}

\begin{figure}
\centerline{
\begin{tabular}{cc}
$\ell_2$ & $\ell_\infty$ \\
\includegraphics[width=0.5\columnwidth]{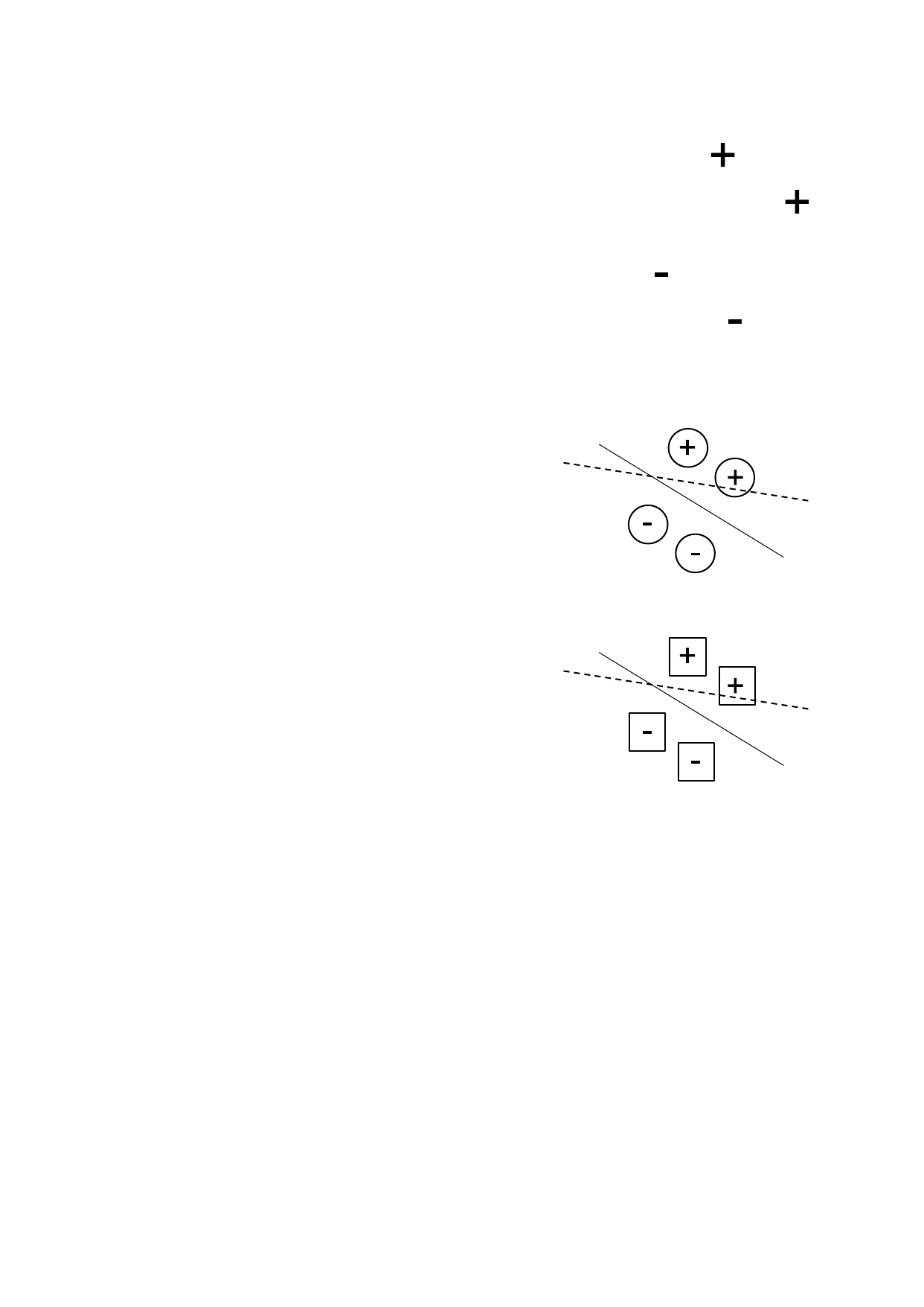} &
\includegraphics[width=0.5\columnwidth]{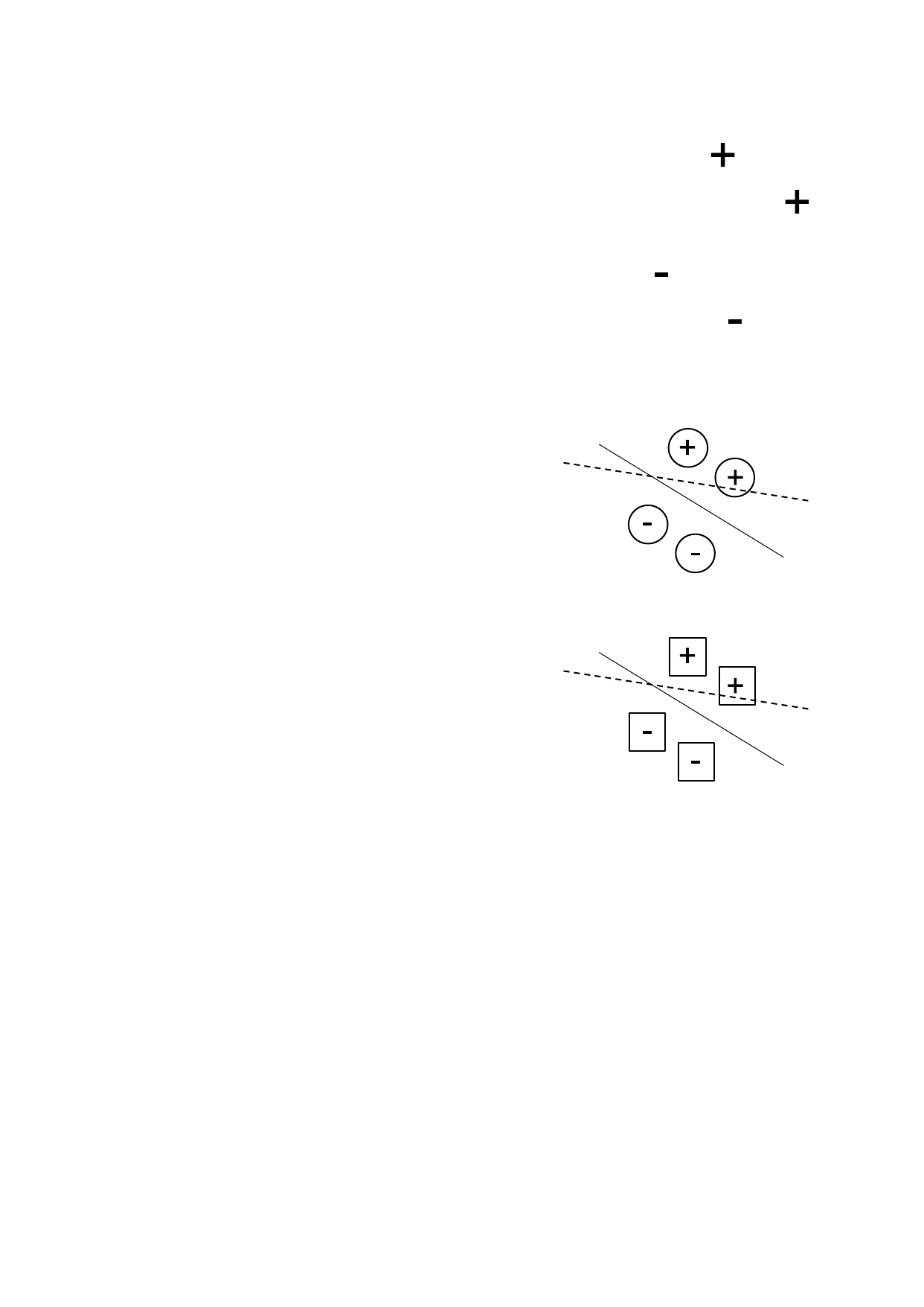}
\end{tabular}
}
\caption[]{The definition of robust accuracy depends on a norm $p$ and a radius $\epsilon$. A robust classifier needs to correctly classify all examples $x_i$ as well as all examples $x'$ that are in an $\epsilon$ ball around $x_i$. In the figure, the linear classifier denoted by the solid line has 100\% robust accuracy but the dashed line does not. In this paper we show that adversarial training produces classifiers that are robust to the specific values of $p,\epsilon$ that were used during training while 1NN is robust to any $p$ with sufficiently small $\epsilon$. }
\label{robustness-illustration}
\end{figure}

We start by formally defining the adversarial vulnerability of a classifier.   The clean accuracy of a classifier is simply the percentage of examples that are correctly classified:

\begin{definition}
    The clean accuracy of a classifier $f(x)$ on a set of examples $X=\{x_i,y_i\}_{i=1}^N$is:
    \begin{equation}
      CA(X;f)=\frac{1}{N} \sum_i [f(x_i) ==y_i]
    \end{equation}
\end{definition}

In contrast, the adversarial (or robust) accuracy is the percentage of examples that are correctly classified when an adversary is allowed to perturb an example by at most $\epsilon$.

\begin{definition}
    The robust  accuracy of a classifier $f(x)$ on a set of examples $X=\{x_i,y_i\}_{i=1}^N$ with a perturbation budget of $\epsilon$ using the  $\ell_p$ norm is:
    \begin{equation}
      RA(X,\delta,p;f)=\frac{1}{N} \sum_i \min_{v: \|v\|_p<\epsilon} [f(x_i+v) ==y_i]
    \end{equation}
\end{definition}

Note that the definition of robust accuracy depends on the type of allowed perturbation, i.e. it depends on an $\epsilon$ ball which is defined by a norm $p$ and an allowed radius $\epsilon$.   Figure~\ref{robustness-illustration} illustrates the definition. On the left the norm that is used is the $\ell_2$ norm and so robust accuracy requires that the decision boundary does not pass through circles around each point. The two linear classifiers that are shown both have 100\% clean accuracy (since they correctly classify all 4 points) but the classifier described by the dashed line has an adversarial accuracy that is only 75\% (since it passes thorough one of the circles). On the right the $\ell_\infty$ norm is used so that robust accuracy requires that the decision boundary does not pass through squares around each point. Intuitively, a good classifier should be robust for a wide range of possible values of $p$ and sufficiently small $\epsilon$. The standard practice is that $p$ and $\epsilon$ are "chosen such that the true
label should stay the same for each in-distribution input within the perturbation set "\cite{croce2021robustbench}. 

The intriguing property of neural networks~\cite{szegedy2013intriguing} is that {\em almost all state-of-the-art image classifiers (in terms of clean accuracy) have an adversarial accuracy that is close to zero}~\cite{mahmood2021robustnessvisiontransformersadversarial}. Referring again to figure~\ref{robustness-illustration}, this means that for almost all examples, the decision boundary of standard neural network classifiers passes through an $\epsilon$ ball around the example.

In an effort to solve this problem, adversarial training suggests to augment the original training examples with adversarial examples that are computed during training. A prime example of this approach is the TRadeoff-inspired Adversarial DEfense via Surrogate-loss minimization (TRADES) algorithm~\cite{zhang2019theoretically}\footnote{\url{https://github.com/yaodongyu/TRADES}}. This algorithm is summarized as Algorithm 1. At each iteration, for each training example $x_i$ the algorithm numerically searches for an adversarial example $x'$ in the $\epsilon$ ball around $x_i$. The parameters of the classifier $\theta$ are then updated in order to (1) output the true label on the original point $x_i$ and (2)  output the same label  for the point $x'$ in the $\epsilon$ ball and $x_i$. In other words,  the TRADES algorithm attempts to minimize the number of misclassified examples along with the number of times that the decision boundary passes through the $\epsilon$ ball around each training example. Note again that the algorithm requires as input the norm $p$ and the radius $\epsilon$. For the toy problem illustrated in figure~\ref{robustness-illustration}, if the algorithm receives as input $p=2$,  it will search for $x'$ in a circle around each training example and if it receives as input $p=\infty$  it will search for $x'$ in a square around each point.

How well does adversarial training work? In the standard evaluation used in the RobustBench Adversarial Robustness Benchmark~\url{https://robustbench.github.io/},  performance is measured separately on different values of $p$ and $\epsilon$. For example, one CIFAR10 leaderboard measures adversarial 
accuracy with attacks in an $\ell_{\infty}$ norm less than $8/255$ while another leaderboard measures adversarial accuracy with attacks with an $\ell_2$ norm less than $0.5$. The adversarial models are trained with a particular attacker in mind, and they are tested with the same attacker. Using this performance measure, the field has shown a steady improvement in performance over the last decade and the " the best entries of the $\ell_p$ leaderboards are still variants of PGD adversarial
training~\cite{zhang2019theoretically,madry2017towards} but with various enhancements (extra data, early stopping, weight averaging)." ~\cite{croce2021robustbench}. 

From an applications viewpoint, evaluating the robustness of algorithms using exactly the same time of perturbations that they saw during training seems overly optimistic. In the real world, systems may be faced with an almost infinite number of different perturbations: ideally we would like algorithms that perform well with one type of perturbation to also perform well with similar perturbations. As a case on point, consider the widely used $\ell_\infty, \epsilon=8/255$ perturbation: the adversary is allowed to change each pixel by at most $8/255$. These perturbations all have an $\ell_2$ norm that is less than $\sqrt{d} \frac{8}{255}$ where $d$ is the number of pixels. In other words, a classifier that was trained using perturbations less than $\frac{8}{255}$ using $\ell_\infty$ norm saw many examples of perturbations that are less than $\sqrt{d}\frac{8}{255}$ using an $\ell_2$ norm. Surely seeing all these examples would help it be robust to $\ell_2$ perturbations as well? 

As shown in figure~\ref{fig:three_figures}, this is not the case. Using TRADES to optimize  $\epsilon=\frac{8}{255}, \ell_\infty$ robustness yields a classifier that is easily fooled using $\epsilon=\sqrt{d}\frac{8}{255}, \ell_2$ perturbations. In other words, TRADES may yield a classifier that increases robustness to exactly the same type of perturbations that it saw during training, but not to perturbations that are only slightly different. In the next section, we consider a much simpler classifier that is guaranteed to be robust to a large range of perturbations.

\begin{algorithm}
  \caption{Adversarial training by TRADES}
  \label{alg:trades}
  \begin{algorithmic}[1]
    \REQUIRE Step sizes $\eta_1$ and $\eta_2$, batch size $m$, number of iterations $K$ in inner optimization, network architecture parametrized by $\theta$
    \ENSURE Robust network $f_\theta$
    \STATE Randomly initialize network $f_\theta$, or initialize network with pre-trained configuration
    \REPEAT
    \STATE Read mini-batch $B = \{x_1, \ldots, x_m\}$ from the training set
    \FOR{$i = 1$ to $m$ (in parallel)}
    \STATE $x'_i \leftarrow x_i + 0.001 \cdot \mathcal{N}(0, I)$, where $\mathcal{N}(0, I)$ is the Gaussian distribution with zero mean and identity variance
    \FOR{$k = 1$ to $K$}
    \STATE $x'_i \leftarrow \Pi_{B_{(x_i, \varepsilon)}}\left(\eta_{1}\cdot {\text{sign}}(\nabla_{x'_i} L(f_\theta(x_i), f_\theta(x'_i))) + x'_i\right)$ , where $\Pi$ is the projection operator
    \ENDFOR
    \ENDFOR
    \STATE $\theta \leftarrow \theta - \eta_2 \sum_{i=1}^{m} \nabla_\theta \left[L(f_\theta(x_i), y_i) + {L(f_\theta(x_i), f_\theta(x'_i))}/{\lambda}\right]/m$
    \UNTIL training converged
  \end{algorithmic}
\end{algorithm}

\begin{figure}[htbp]
  \centering
  
  \subfloat[TRADES\label{subfig:fig2}]{%
    \includegraphics[width=0.5\linewidth]{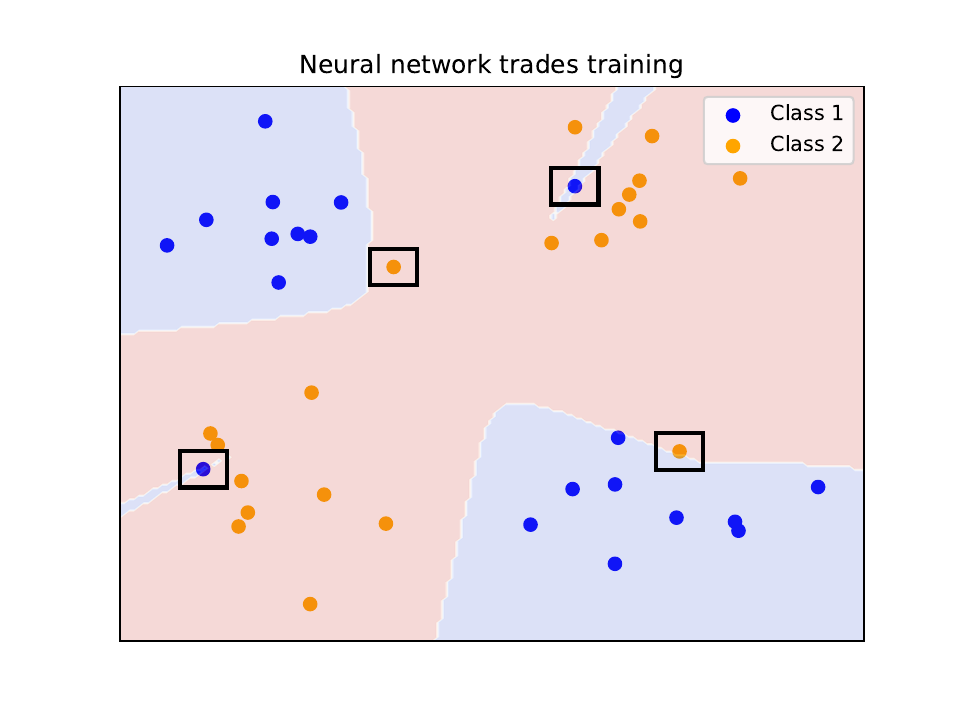}%
  }
  \subfloat[1NN\label{subfig:fig3}]{%
    \includegraphics[width=0.5\linewidth]{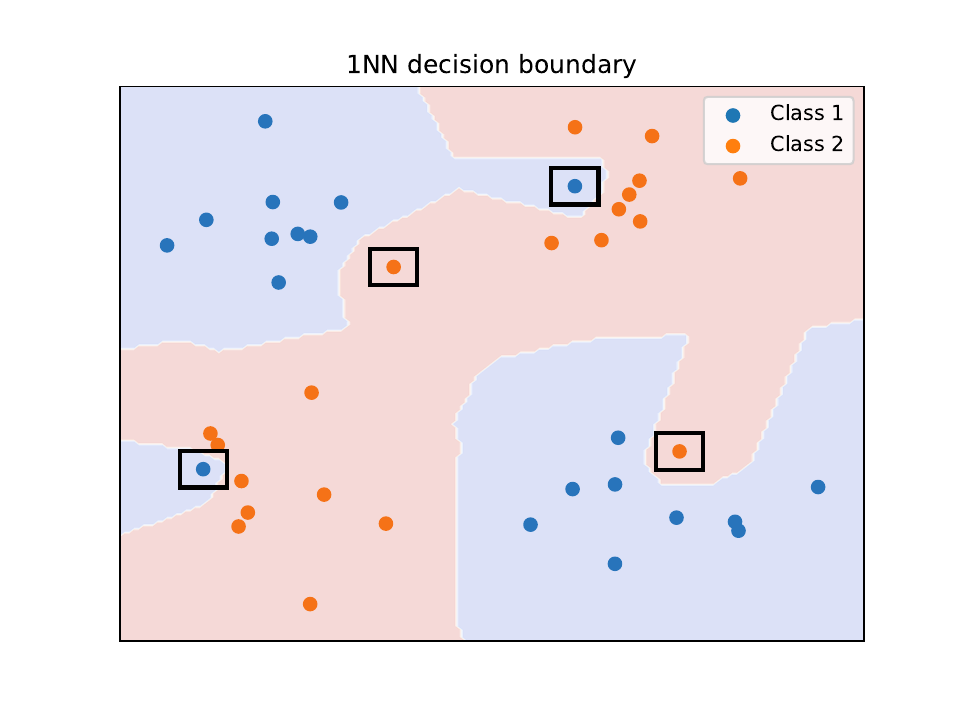}%
  }
  \caption{The decision boundary of TRADES (left) and 1NN on a simple 2D classification problem. The black square denotes the $\epsilon$ ball that defines adversarial accuracy (this was the $\epsilon$ ball used when the neural network was trained using TRADES). TRADES succeeds in making the decision boundary constant within the $\epsilon$ ball for some examples but not all. In contrast, the 1NN classifier achieves high robustness to {\em any} small perturbation of the training examples.} 
  \label{fig:overall_label}
\end{figure}

\section{The Robust Accuracy of 1NN}

\begin{figure}
\centerline{
\includegraphics[width=0.8\columnwidth]{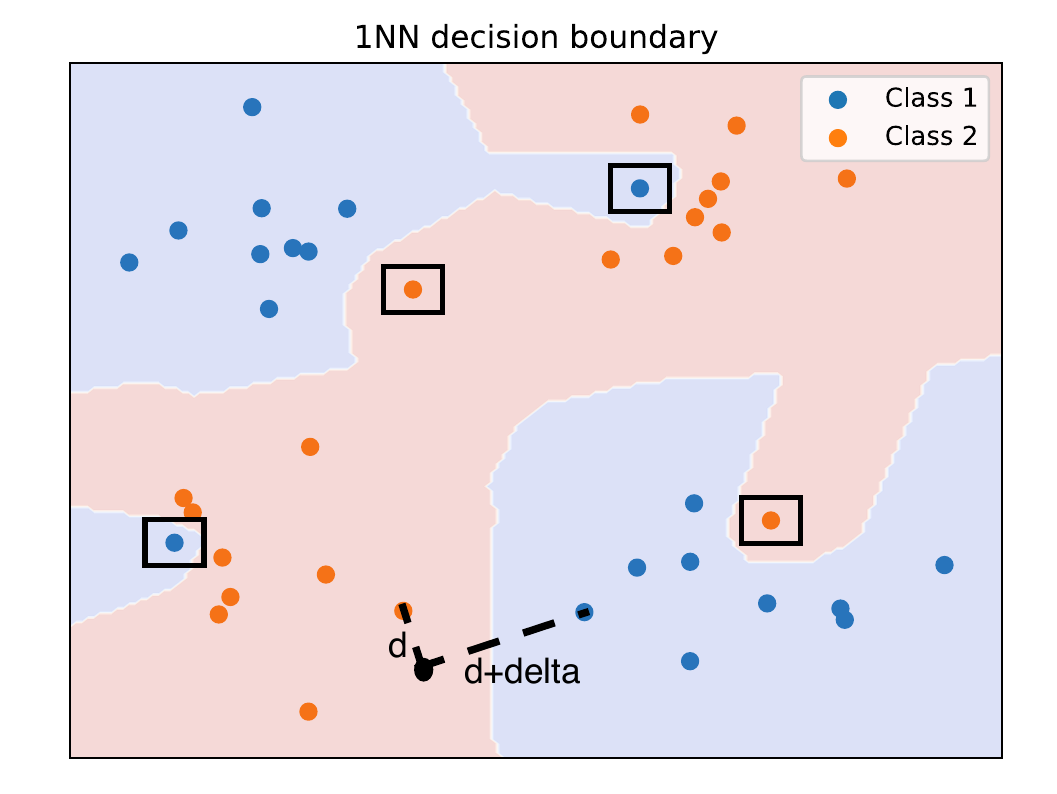}}
\caption[]{Illustration of the proof of theorem~\ref{thm:thm2}. If the distance to the closest point in the correct class is $d$ and the closest point in the incorrect class is $d+\delta$ then the 1NN classifier will be robust to any perturbation smaller than $\delta/2$.}
\label{proof-fig}
\end{figure}

The 1NN classifier has been extensively studied in machine learning (e.g.~\cite{duda2006pattern}) and is known to achieve a generalization error that is at most twice that of the Bayes optimal classifier as the number of examples goes to infinity. It has also been studied in the context of adversarial examples by~\cite{wang2019analyzingrobustnessnearestneighbors}. Our interest here is to use the 1NN as a baseline to which we can compare adversarial training. To motivate the comparison, consider the 2D classification problem shown in figure~\ref{fig:overall_label}. The left figure shows data for a binary classification problem  in two dimensions along with the decision boundary found by using TRADES to train a neural network classifier. As mentioned in the previous section, the goal of adversarial training is to maximize accuracy and to ensure that the decision boundary does not pass through the $\epsilon$ ball around the training examples. In figure~\ref{fig:overall_label} we plot this $\epsilon$ ball around a few training examples. It can be seen that even though TRADES attempts to push the decision boundary outside these $\epsilon$ balls it fails to do so for some of the points. For comparison, the right figure shows the decision boundary learned using the 1NN classifier. Note that it indeed succeeds in pushing the decision boundary outside the $\epsilon$ balls, even though there was no explicit "robust" training procedure. We now make this robustness more formal. 

% \begin{definition} For a classifier $f(x)$, an input $x$,  the optimal $\ell_p$ attack  is defined as the vector $v$ that has minimal $\ell_p$ norm and $f(x_+v) \neq f(x)$
% \label{}
% \end{definition}

\begin{definition} 

The 1NN classifier is a function that returns for each $x$, the class of $x_{NN}$ where $x_{NN}=\arg\min_{j} d(x, x_j)$ is the closest training example  to $x$ in terms of $\ell_2$ norm.

\label{def:1NN}
\end{definition}

We now show that the 1NN classifier will be provably robust to any small perturbations for test examples for which it is "confident", i.e. the nearest neighbor from the true class is significantly closer than the nearest neighbor from another class.

\begin{definition}
    $\delta$ 1NN Confident test examples. A test example is $\delta$ confidently classified by the 1NN classifier if $d(x, x_1) > d(x, x_0) + \delta$ where $x_0$ is the closest example in the correct class and $x_1$ is the closest example in another class.
\label{def:conf}
\end{definition}

For points for which the 1NN classifier is confident,   it is easy to show that it will be robust to small perturbations. The following theorem formalizes this intuition.

\begin{theorem}
Denote by $X_c$ a set of examples that are $\delta$ confidently classified by the 1NN classifier. robust accuracy of the 1NN classifier on this set  with $\ell_2$ norm, $\epsilon = \frac{\delta}{2}$ is $100\%$:
\[
 RA(X_c,\delta/2,2;1NN)=1
\]
\label{thm:thm2}
\end{theorem}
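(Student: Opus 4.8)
The plan is to prove the theorem by a direct triangle-inequality argument, exploiting the fact that the Euclidean distance to any fixed training point is $1$-Lipschitz in the query point. Fix a test example $x \in X_c$ with correct class label $y$, let $x_0$ be its nearest neighbor within class $y$ and $x_1$ its nearest neighbor in any other class, so that the confidence assumption of Definition~\ref{def:conf} gives $d(x,x_1) > d(x,x_0) + \delta$. I want to show that for \emph{every} perturbation $v$ with $\|v\|_2 < \delta/2$, the nearest training point to $x+v$ still lies in class $y$, hence the 1NN classifier outputs $y$; since this holds for all admissible $v$, the inner minimum in the definition of $RA$ equals $1$ for this example, and averaging over all of $X_c$ yields $RA(X_c,\delta/2,2;1NN)=1$.

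The key step is to control how each distance changes under the perturbation. For any training point $z$, the triangle inequality applied to $\|(x+v)-z\|_2$ gives both $d(x+v,z) \le d(x,z) + \|v\|_2$ and $d(x+v,z) \ge d(x,z) - \|v\|_2$. First I would apply the upper bound to $x_0$: since $\|v\|_2 < \delta/2$, the perturbed distance to the correct-class neighbor satisfies $d(x+v,x_0) < d(x,x_0) + \delta/2$. Next I would apply the lower bound to an arbitrary point $z$ lying in a wrong class; because $x_1$ is by definition the closest wrong-class point, $d(x,z) \ge d(x,x_1)$, and therefore $d(x+v,z) \ge d(x,x_1) - \|v\|_2 > d(x,x_1) - \delta/2$.

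Combining these two bounds with the confidence hypothesis closes the argument: for any wrong-class $z$ we obtain $d(x+v,z) > d(x,x_1) - \delta/2 > (d(x,x_0)+\delta) - \delta/2 = d(x,x_0) + \delta/2 > d(x+v,x_0)$. Thus $x_0$, a correct-class point, is strictly closer to $x+v$ than every point of every other class, so the global nearest neighbor of $x+v$ must belong to class $y$ and the classifier is correct. The inequalities are strict throughout (using the open ball $\|v\|_2 < \delta/2$ together with the strict confidence condition), which is exactly what guarantees $100\%$ rather than merely high accuracy.

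As for the main obstacle: the mathematical content is essentially the one-line Lipschitz bound, so there is no deep difficulty here. The only point requiring care is that robustness must be certified against \emph{all} competing training points simultaneously, not just against $x_1$; this is handled cleanly by the observation that lower-bounding the perturbed distance to the \emph{closest} wrong-class point $x_1$ automatically lower-bounds it for every other wrong-class point, since $d(x,z)\ge d(x,x_1)$. A secondary subtlety is the bookkeeping on strict versus non-strict inequalities, which must line up so that the conclusion is a strict separation and the robust accuracy is exactly $1$.
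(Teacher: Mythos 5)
Your proof is correct and follows essentially the same route as the paper's own argument: both use the triangle inequality ($1$-Lipschitzness of the distance to any training point) to show the perturbed distance to the correct class is at most $d(x,x_0)+\delta/2$ while the perturbed distance to any wrong-class point is more than $d(x,x_1)-\delta/2$, and then invoke the $\delta$-confidence gap to conclude strict separation. Your handling of arbitrary wrong-class points $z$ via $d(x,z)\ge d(x,x_1)$ is just an unpacking of the $\min$ over the wrong class that the paper uses directly, so there is no substantive difference.
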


\begin{proof}
The proof is based on the triangle inequality and is illustrated graphically in figure~\ref{proof-fig}. A full proof is given in the appendix. 
\end{proof}

We now turn to characterizing conditions under which certain points will be confidently characterized by the 1NN classifier. We use the notion of the distance between the two classes: the minimal distance between a point in one class and a point in the other class. The larger that this distance is, the more confident the 1NN classifier will be in its prediction. This leads to the following theorem.
\begin{theorem}
If the distance between two classes is $d_0$ then the 1NN classifier achieves 100\% 
adversarial accuracy on the training set with $\ell_2$ norm and 
$\epsilon=\frac{d_0}{2}$ 
\label{thm:thm1}
\end{theorem}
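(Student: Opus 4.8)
The plan is to establish robustness one training example at a time and then conclude $RA = 1$ because the argument is uniform over all of them. The only tool needed is the triangle inequality, exactly as in Theorem~\ref{thm:thm2}; in fact Theorem~\ref{thm:thm1} can be read as the special case of Theorem~\ref{thm:thm2} in which the ``test'' point is itself a training example, so that its nearest neighbor from the correct class sits at distance zero and the confidence margin is precisely the inter-class distance $d_0$. I would nonetheless give the direct argument, since it is short and self-contained.

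First I would fix an arbitrary training example $x_i$ with label $y_i$, say belonging to class $c$, and an arbitrary admissible perturbation $v$ with $\|v\|_2 < d_0/2$. Because $x_i$ is itself a training point of the correct class, the nearest correct-class neighbor of the perturbed point $x_i + v$ is at distance at most $d(x_i+v, x_i) = \|v\| < d_0/2$. Next I would bound the distance from $x_i + v$ to any training point $x_j$ of a \emph{different} class: by the definition of the inter-class distance $d(x_i, x_j) \ge d_0$, so the triangle inequality gives
\[
d(x_i + v, x_j) \;\ge\; d(x_i, x_j) - \|v\| \;\ge\; d_0 - \|v\| \;>\; d_0 - \frac{d_0}{2} \;=\; \frac{d_0}{2}.
\]
Thus every wrong-class point is strictly farther than $d_0/2$ from $x_i + v$, while at least one correct-class point lies strictly closer than $d_0/2$. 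Hence the global nearest neighbor of $x_i + v$ must lie in class $c$, the 1NN classifier outputs $y_i$, and since $x_i$ and $v$ were arbitrary, the robust accuracy is $100\%$.

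I do not expect any serious obstacle, as the whole argument is elementary. The one point that requires care is the interplay between the strict inequality $\|v\|_2 < d_0/2$ in the definition of robust accuracy and the possibly non-strict inequality $d(x_i,x_j) \ge d_0$ in the definition of the inter-class distance. Because the perturbation radius is open, the strict separation $d(x_i+v,x_j) > d_0/2 > \|v\|$ continues to hold even when $d_0$ is attained by some cross-class pair, which is exactly what lets the bound $\epsilon = d_0/2$ be achieved rather than merely approached; it also rules out a tie at the decision boundary, so no tie-breaking convention for the nearest neighbor is needed.
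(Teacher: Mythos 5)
Your proof is correct, and at its core it is the same triangle-inequality argument the paper relies on: the paper dispatches this theorem in one line as a direct consequence of Theorem~\ref{thm:thm2}, viewing each training point as its own nearest correct-class neighbor at distance zero with confidence margin $d_0$ --- precisely the reduction you mention before unfolding it into a self-contained argument. Your inlined version does add one genuine refinement. Under Definition~\ref{def:conf}, confidence requires the \emph{strict} inequality $d(x,x_1) > d(x,x_0) + \delta$, so a training point that actually attains the inter-class distance $d_0$ is not $d_0$-confident, and the reduction to Theorem~\ref{thm:thm2} with $\delta = d_0$ does not literally apply to such a point; one must either pass to margins $\delta' < d_0$ and exploit the openness of the perturbation ball in the definition of robust accuracy, or argue directly as you do. Your chain $d(x_i+v,x_j) \ge d_0 - \|v\| > d_0/2 > \|v\| \ge d(x_i+v,x_i)$ handles exactly this boundary case and also rules out ties, which is the point you flag at the end. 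So what your approach buys is a slightly more careful and self-contained proof that reaches $\epsilon = d_0/2$ exactly; what the paper's reduction buys is brevity and reuse of the appendix computation already done for Theorem~\ref{thm:thm2}.
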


This theorem follows directly from theorem~\ref{thm:thm2}. At first glance, guaranteeing robustness for {\em training} examples seems trivial, but note that for many modern classifiers, even the output on the training images can be changed with a tiny perturbation of the input. Specifically, all the examples in figure~\ref{fig:three_figures} are {\em training images} and yet a CNN that is trained using the state-of the art adversarial training algorithm can be fooled with tiny perturbations. The following theorem shows that the same property holds for {\em test} examples, but only when the number of training examples goes to infinity,

\begin{theorem}
If the distance between two classes is $d_0$ then the 1NN classifier achieves 100\% robust accuracy on the test set with $\ell_2$ norm  and $\epsilon=\frac{d_0}{2}$  as the number of training examples goes to infinity,
\label{thm:theorem3}
\end{theorem}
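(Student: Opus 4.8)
The plan is to deduce Theorem~\ref{thm:theorem3} from Theorem~\ref{thm:thm2} by arguing that, as the number of training examples $N$ grows, $P$-almost every test point becomes arbitrarily close to being $d_0$-confidently classified in the sense of Definition~\ref{def:conf}. I would fix a test point $x$ with true label (say class $0$) lying in the support of its class-conditional distribution $P_0$, let $x_0$ and $x_1$ denote its nearest training neighbors from the correct and the incorrect class, and then establish one bound on each of the two distances and combine them through the confidence margin.

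The first and easy bound is a uniform lower bound on $d(x,x_1)$. Since $x$ lies in $\mathrm{supp}(P_0)$ and every wrong-class training point lies in $\mathrm{supp}(P_1)$, the hypothesis that the distance between the two classes is $d_0$ gives $d(x,x_1)\ge d_0$ for every $N$. The second bound is an upper bound on $d(x,x_0)$ obtained from standard nearest-neighbor consistency: because $x$ is a support point, every ball $B(x,r)$ has positive $P_0$-measure, and, assuming a positive class prior so that the number of correct-class samples tends to infinity, with probability one $d(x,x_0)\to 0$ as $N\to\infty$.

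Combining the two bounds, the confidence margin satisfies $d(x,x_1)-d(x,x_0)\ge d_0-d(x,x_0)\to d_0$. Hence for every $\eta>0$ and all sufficiently large $N$ the point $x$ is $(d_0-\eta)$-confidently classified, so Theorem~\ref{thm:thm2} certifies that $1$NN is correct on the whole open $\ell_2$ ball of radius $(d_0-\eta)/2$ around $x$; letting $\eta\to 0$ shows the per-point robustness radius converges to $d_0/2$. Since this holds for $P_0$-almost-every (and, symmetrically, $P_1$-almost-every) test point, integrating over the test distribution shows that the expected robust accuracy at any radius $\epsilon<d_0/2$ converges to $1$.

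I expect the main obstacle to be the boundary case at exactly $\epsilon=d_0/2$. For any finite $N$ we have $d(x,x_0)>0$, so the certified radius $(d_0-d(x,x_0))/2$ is strictly below $d_0/2$, and one can always choose a perturbation of $\ell_2$ norm just under $d_0/2$ that moves $x$ to a point essentially equidistant from the two classes; thus the exact value $100\%$ is attained only in the limit, and the honest statement is $\lim_{N\to\infty}RA=1$ for every $\epsilon<d_0/2$. A secondary subtlety is that nearest-neighbor consistency is only an almost-sure statement and requires $x$ to be a support point, so the argument must discard a measure-zero set of test points and a probability-zero event, and the conclusion is really about the expected rather than every-sample robust accuracy.
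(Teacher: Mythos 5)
Your proposal takes essentially the same route as the paper's own (very terse) proof: both rest on the fact that the distance from a test point to its nearest same-class training neighbor tends to zero as $N\to\infty$, combined with the separation assumption to lower-bound the wrong-class distance by $d_0$, so that Theorem~\ref{thm:thm2} applies to the resulting confidence margin. Your closing observation---that for finite $N$ the certified radius stays strictly below $d_0/2$, so the clean formulation is $\lim_{N\to\infty}RA=1$ for every $\epsilon<d_0/2$ rather than exactly at $\epsilon=d_0/2$---is a legitimate refinement of the boundary case that the paper glosses over.
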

This theorem follows from the fact that as the number of training examples goes to infinity, the distance of a test example to the nearest neighbor in the same class goes to zero. This is the same property that is used to analyze the asymptotic performance of the 1NN classifier~\cite{duda2006pattern}.

The theory so far has focused on perturbations where the adversary is constrained to an $\epsilon$ ball in $\ell_2$. What happens with other $p$ norms? The easiest case is when $p \leq 2$.

\begin{corollary}
    for any $\text{p} \leq 2$, if the $\ell_2$ distance between the two classes is $d_0$ then the 1NN classifier achives  $100\%$ robust accuracy on  the training set with 
    $\ell_p$ norm and $\epsilon=\frac{d_0}{2}$.
    \label{coro1}
\end{corollary}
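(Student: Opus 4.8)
The plan is to reduce the corollary to Theorem~\ref{thm:thm1}, which already establishes $100\%$ robust accuracy on the training set for the $\ell_2$ norm with $\epsilon = d_0/2$. The key observation is that for $p \leq 2$ the $\ell_p$ ball of a given radius is contained in the $\ell_2$ ball of the same radius, so every perturbation the adversary is allowed to make under the $\ell_p$ budget is also allowed under the $\ell_2$ budget. Since Theorem~\ref{thm:thm1} guarantees robustness against the larger ($\ell_2$) set of perturbations, robustness against the smaller ($\ell_p$) set follows immediately.

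Concretely, I would first recall the standard monotonicity of $\ell_p$ norms in $p$: for any vector $v$ and any $p \leq 2$ one has $\|v\|_2 \leq \|v\|_p$. This is the inequality that must be invoked with the correct direction --- increasing $p$ \emph{decreases} the norm --- and it is justified either by the power-mean inequality or by the elementary scaling argument that normalizes $\|v\|_p = 1$ and uses $|v_i| \leq 1$. The immediate consequence is the set inclusion
\[
\{v : \|v\|_p < \tfrac{d_0}{2}\} \subseteq \{v : \|v\|_2 < \tfrac{d_0}{2}\},
\]
because $\|v\|_p < d_0/2$ forces $\|v\|_2 \leq \|v\|_p < d_0/2$.

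With this inclusion in hand, the remainder is essentially a restatement. Fixing any training example $x_i$ with label $y_i$, Theorem~\ref{thm:thm1} guarantees that the 1NN classifier returns $y_i$ on $x_i + v$ for every $v$ with $\|v\|_2 < d_0/2$. Any $v$ with $\|v\|_p < d_0/2$ lies in this set by the inclusion above, so the classifier also returns $y_i$ on $x_i + v$ for all such $v$. Taking the minimum over the (smaller) $\ell_p$ ball in the definition of robust accuracy therefore still yields a correct classification for every $x_i$, and hence $RA$ on the training set equals $1$.

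I do not expect a genuine obstacle here: the entire content is the correct direction of the norm inequality combined with Theorem~\ref{thm:thm1}. The one point worth stating carefully is that the inclusion goes the convenient way \emph{precisely because} $p \leq 2$; for $p > 2$ the inclusion reverses and this reduction fails, which is exactly why the corollary is restricted to $p \leq 2$ and why the $\ell_\infty$ case, treated separately, requires a different scaling of the allowed radius.
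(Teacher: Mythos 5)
Your proof is correct and is essentially the paper's own argument: the paper also proves Corollary~\ref{coro1} by noting that for $p \leq 2$ the $\ell_p$ ball is contained in the $\ell_2$ ball of the same radius and then invoking Theorem~\ref{thm:thm1}. You simply spell out the norm-monotonicity inequality $\|v\|_2 \leq \|v\|_p$ and the resulting set inclusion in more detail than the paper does.
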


\begin{corollary}
    For any $\text{p} \leq 2$ the 1NN classifier achieves  $100\%$ robust accuracy with $\ell_p$ norm and $\epsilon=\frac{\delta}{2}$ on  test examples   that are classified $\delta$ confidently .
    \label{coro2}
\end{corollary}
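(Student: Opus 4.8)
The plan is to reduce the claim entirely to Theorem~\ref{thm:thm2} by exploiting the monotonicity of $\ell_p$ norms in $p$. The one fact I need is that for every vector $v \in \mathbb{R}^d$ and every $p \leq 2$ one has $\|v\|_2 \leq \|v\|_p$. I would establish this by the standard normalization argument: by homogeneity it suffices to treat $\|v\|_p = 1$, which forces $|v_i| \leq 1$ for each coordinate, hence $|v_i|^2 \leq |v_i|^p$; summing gives $\|v\|_2^2 = \sum_i |v_i|^2 \leq \sum_i |v_i|^p = \|v\|_p^p = 1$, so $\|v\|_2 \leq 1 = \|v\|_p$.

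With this inequality in hand I would argue by containment of the two perturbation sets. Fix a test example $x$ that is classified $\delta$ confidently and set $\epsilon = \delta/2$. Any perturbation $v$ available to the $\ell_p$ adversary satisfies $\|v\|_p < \epsilon$, and the norm inequality then gives $\|v\|_2 \leq \|v\|_p < \epsilon$. Thus the $\ell_p$ ball of radius $\epsilon$ is contained in the $\ell_2$ ball of radius $\epsilon$: every move the $\ell_p$ adversary can make is also a legal move for the $\ell_2$ adversary.

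Theorem~\ref{thm:thm2} already guarantees that on $\delta$-confident examples the 1NN classifier returns the correct label for \emph{every} $v$ with $\|v\|_2 < \epsilon$. Since each $\ell_p$-admissible perturbation lands in that $\ell_2$ ball, the predicted label is likewise unchanged for every $\|v\|_p < \epsilon$, and hence the robust accuracy is $100\%$. This is the exact analogue, for confident test examples, of the training-set argument already carried out in Corollary~\ref{coro1}.

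I expect no genuine obstacle here; the proof is essentially a one-line appeal to Theorem~\ref{thm:thm2} once the norm comparison is in place. The only point that warrants care is the \emph{direction} of the inequality: for $p \leq 2$ it is the $\ell_2$ ball that is the larger set, so the reduction must run from the weaker $\ell_p$ adversary to the stronger $\ell_2$ adversary, not the reverse. The strict inequality in the confidence definition is preserved throughout, since $\|v\|_2 \leq \|v\|_p < \epsilon$ remains strict.
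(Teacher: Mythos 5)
Your proposal is correct and follows essentially the same route as the paper: the paper's proof of Corollaries~\ref{coro1} and~\ref{coro2} is exactly the observation that the $\ell_p$ ball is contained in the $\ell_2$ ball for $p \leq 2$, which reduces the claim to Theorem~\ref{thm:thm2}. The only difference is that you spell out the normalization argument for $\|v\|_2 \leq \|v\|_p$, which the paper states without proof.
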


Both of these follow from the fact that the unit ball with $\ell_p$ norm is contained
within the unit ball with $\ell_2$ norm for any $p \leq 2$

Finally, we can summarize all of our results for all $p$ norms in the following corollary:
\begin{corollary}
    Suppose that the $\ell_2$ distance between the two classes is $2\delta\sqrt{N}$ where N is the dimensionality. For any p the adversarial accuracy with $\ell_p$ norm and $\epsilon=\delta$ is $100\%$ on the training set and as the number of examples goes to infinity the adversarial accuracy on the test set will also be $100\%$
    \label{coro3}
\end{corollary}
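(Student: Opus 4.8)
The plan is to reduce the statement for an arbitrary $\ell_p$ norm back to the $\ell_2$ results already established in Theorem~\ref{thm:thm1} and Theorem~\ref{thm:theorem3}. The key observation is that the hypothesis $d_0 = 2\delta\sqrt{N}$ is chosen precisely so that the $\ell_2$ robustness radius guaranteed by those theorems, namely $d_0/2 = \delta\sqrt{N}$, is large enough to contain every $\ell_p$ ball of radius $\delta$. Concretely, I would first recall the elementary norm inequalities on $\mathbb{R}^N$: for every vector $v$ and every $p \ge 1$ one has $\|v\|_\infty \le \|v\|_p$ (the $\ell_\infty$ norm is the smallest) and $\|v\|_2 \le \sqrt{N}\,\|v\|_\infty$.

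Chaining these two inequalities, for any perturbation $v$ with $\|v\|_p < \delta$ I would conclude
\[
\|v\|_2 \;\le\; \sqrt{N}\,\|v\|_\infty \;\le\; \sqrt{N}\,\|v\|_p \;<\; \sqrt{N}\,\delta \;=\; \frac{d_0}{2}.
\]
In other words, the entire $\ell_p$ ball of radius $\delta$ about any point is contained in the $\ell_2$ ball of radius $d_0/2$ about that point. Since Theorem~\ref{thm:thm2} establishes that the 1NN label is constant on the latter (larger) $\ell_2$ ball, it is a fortiori constant on the contained $\ell_p$ ball, so no $\ell_p$ adversary with budget $\delta$ can change the prediction.

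It then remains to apply this containment in the two regimes. For the training set, Theorem~\ref{thm:thm1} already guarantees an $\ell_2$ robustness radius of $d_0/2 = \delta\sqrt{N}$, so the containment immediately yields $100\%$ robust accuracy under $\ell_p$ with budget $\delta$. For the test set I would invoke Theorem~\ref{thm:theorem3}: as the number of training examples grows, the distance from a test point to its nearest same-class neighbor tends to zero, so every test point eventually enjoys the same $\ell_2$ robustness radius $d_0/2$, and the containment again upgrades this to $\ell_p$ robustness with budget $\delta$ in the limit.

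I do not expect a genuine obstacle here; the only thing requiring care is the direction of the norm comparison. One must take the worst case over $p$, which is $p=\infty$ (the largest $\ell_p$ ball for a fixed radius), and verify that the dimensional factor $\sqrt{N}$ in the hypothesis $d_0 = 2\delta\sqrt{N}$ exactly compensates for it, since the farthest corner of the radius-$\delta$ cube sits at $\ell_2$ distance $\delta\sqrt{N}$. Using the inequality in the wrong direction would produce a containment that fails for large $p$. This also explains why Corollary~\ref{coro1} and Corollary~\ref{coro2}, restricted to $p \le 2$, needed no dimensional factor, whereas the uniform-in-$p$ statement here necessarily pays the $\sqrt{N}$ price.
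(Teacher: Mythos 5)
Your proposal is correct and follows essentially the same route as the paper: the paper's proof chains the containments $B_{\ell_p}(\delta) \subseteq B_{\ell_\infty}(\delta) \subseteq B_{\ell_2}(\sqrt{N}\delta)$ and then invokes the earlier $\ell_2$ results, which is exactly your chain of norm inequalities $\|v\|_2 \le \sqrt{N}\|v\|_\infty \le \sqrt{N}\|v\|_p$ written in ball form. Your additional remarks on why $p=\infty$ is the worst case and why the $\sqrt{N}$ factor is absent in Corollaries~\ref{coro1} and~\ref{coro2} are accurate elaborations of the same argument, not a different method.
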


\begin{proof}
This follows from the fact that for any p the $\ell_p$ unit ball is contained within
the $\ell_{\infty}$ unit ball and the $\delta$ ball for $\ell_{\infty}$ is contained within the $\sqrt{N}\delta$ ball for $\ell_2$
\end{proof}

The corollary can be illustrated by returning again to figure~\ref{fig:overall_label}. The 1NN classification boundary does not pass though the squares shown around the training points indicating it is robust to small $\ell_\infty$ perturbations but it also does not pass through small circles or diamonds around each point, indicating it is also robust to small $\ell_2$ or $\ell_1$ perturbations. In contrast, adversarial training (shown on the left) is trained with a particular perturbation and fails to be 100\% robust even for that perturbation.

As mentioned above, the adversarial accuracy of 1NN was recently analyzed in~\cite{wang2019analyzingrobustnessnearestneighbors}. They showed that when the distributions of the two classes overlap, then the 1NN classifier will have adversarial accuracy that approaches zero, while a KNN classifier with large K can achieve the same adversarial accuracy of the Bayes-Optimal classifier. Our theoretical results, on the other hand, assume that the two distributions are separated. We believe that the assumption of separated distributions is relevant for the small values of $\epsilon$ that are commonly used to test for adversarial vulnerability.  As mentioned above, in these settings $\epsilon$ is chosen so small so that {\em any} point in the $\epsilon$ ball around a point $x$ from the training distribution has the same label as $x$.  This is equivalent to the assumption that the distributions are separated. 

Perhaps a more significant difference between our work and that of ~\cite{wang2019analyzingrobustnessnearestneighbors} is that we are primarily interested in the 1NN classifier as a baseline with which we can compare modern adversarial training algorithms. In the next section, we perform extensive empirical comparisons. 

\section{Experiments}

\begin{table}[ht]
\centering

\begin{tabular}{@{}lSS@{}}
\toprule
Algorithm & {Training (seconds)} & {Inference  (seconds)} \\ 
\midrule
TRADES & 7467.693 & 0.00009 \\
early stop & 99.84268 & 0.00009 \\
1NN faiss & 0.34034  & 0.00001 \\

\bottomrule\\
\end{tabular}
\caption{Running Times for Training and Inference, on GeForce RTX 2080 Ti, with ResNet18 over a binary CIFAR10 problem. For 1NN we use the FAISS package which builds a datastructure to enable fast nearest neighbor search, and we list the time required to build the datastructure as training time. TRADES is about 75 times slower than standard neural network training and about 20,000 times slower than 1NN.}
\label{tab:running_times}

\end{table}

\begin{table}
\centerline{
\begin{tabular}{lcccccc}
\toprule
 & 1-NN & Normal Training & Early Stop  & TRADES  & TRADES  & TRADES \\
  & & & & Best & Recom & Both \\
\midrule
Clean Acc & 75.11 $\pm$ 7.05 & 86.02 $\pm$ 8.95 & 74.29 $\pm$ 12.69 & 92.71 $\pm$ 7.84 & \textbf{95.24 $\pm$ 4.98} & 94.36 $\pm$ 4.65 \\
$\ell_2$ Rob Acc (Train) & \textbf{100.0 $\pm$ 0.0} & 13.47 $\pm$ 13.91 & 36.29 $\pm$ 14.57 & 29.31 $\pm$ 14.79 & 14.36 $\pm$ 11.56 & 29.64 $\pm$ 14.27 \\
$\ell_2$ Rob Acc (Test) & \textbf{52.96 $\pm$ 9.34} & 13.49 $\pm$ 14.07 & 36.02 $\pm$ 14.55 & 27.87 $\pm$ 13.83 & 14.07 $\pm$ 11.55 & 28.62 $\pm$ 13.91 \\
$\ell_\infty$ Rob Acc (Train) & \textbf{100.0 $\pm$ 0.0} & 32.47 $\pm$ 18.35 & 53.49 $\pm$ 11.09 & 74.64 $\pm$ 16.33 & 61.60 $\pm$ 21.26 & 77.20 $\pm$ 19.40 \\
$\ell_\infty$ Rob Acc (Test) & 58.76 $\pm$ 9.61 & 32.49 $\pm$ 18.35 & 53.18 $\pm$ 10.99 & \textbf{73.36 $\pm$ 17.06} & 58.51 $\pm$ 22.05 & 72.58 $\pm$ 19.09 \\
$\ell_\infty + \ell_2$ RA (Train) & \textbf{100.0 $\pm$ 0.0} & 22.98 $\pm$ 15.57 & 44.87 $\pm$ 11.97 & 51.98 $\pm$ 14.06 & 38.00 $\pm$ 14.69 & 53.44 $\pm$ 15.60 \\
$\ell_\infty + \ell_2$ RA (Test) & \textbf{55.84 $\pm$ 9.56} & 22.98 $\pm$ 15.66 & 44.60 $\pm$ 11.82 & {50.60 $\pm$ 13.69} & 36.29 $\pm$ 14.61 & 50.67 $\pm$ 15.53 \\
\bottomrule
\\
\end{tabular}}
\caption[]{Results on 45 binary classification problems taken from CIFAR10 (e.g. "dog" vs. "frog").  For each binary problem we compute the accuracy using all test points, and each entry in the table shows the average and standard deviation of the accuracy over 45 problems. On the training set, 1NN achieves 100\% robust accuracy while none of the adversarial training methods do so. On the test set, TRADES outperforms 1NN in terms of the perturbations on which it was trained ($\ell_\infty,\epsilon=\frac{8}{255}$) but not on slightly different perturbations ($\ell_2,\epsilon=1.74$). In terms of robustness to both perturbations (bottom row), the simple 1NN outperforms adversarial training.}\
\label{tab:cifar10}
\end{table}

\begin{table}
\centerline{
\begin{tabular}{lcccccc}
\toprule
& 1-NN & Normal Training & Early Stop  & TRADES  & TRADES  & TRADES \\
  & & & & Best & Recom & Both \\
\midrule
Clean Acc  & 99.56 $\pm$ 0.57 & 99.87 $\pm$ 0.40 & \textbf{100.00 $\pm$ 0.00} & 99.87 $\pm$ 0.40 & 99.80 $\pm$ 0.50 & 99 $\pm$ 2.1\\
$\ell_2$ RA (Train) & \textbf{99.43 $\pm$ 2.39} & 0.20 $\pm$ 1.20 & 0.03 $\pm$ 0.19 & 0.60 $\pm$ 2.65 & 0.00 $\pm$ 0.00 & 2.82 $\pm$ 6.83 \\
$\ell_2$ RA (Test) & \textbf{85.42 $\pm$ 16.19} & 0.07 $\pm$ 0.25 & 0.00 $\pm$ 0.00 & 0.71 $\pm$ 2.99 & 0.00 $\pm$ 0.00 & 3.57 $\pm$ 8.39 \\
$\ell_\infty$ RA (Train) & \textbf{100.00 $\pm$ 0.00} & 4.78 $\pm$ 8.60 & 11.59 $\pm$ 12.86 & 97.67 $\pm$ 2.38 & 92.13 $\pm$ 4.65 & 4.93 $\pm$ 8.72 \\
$\ell_\infty$ RA (Test) & 79.40 $\pm$ 17.32 & 3.29 $\pm$ 7.42 & 7.87 $\pm$ 11.05 & \textbf{97.09 $\pm$ 2.26} & 89.04 $\pm$ 7.65 & 5.75 $\pm$ 9.46 \\
$\ell_\infty + \ell_2$ RA (Train) & \textbf{99.72 $\pm$ 1.20} & 2.49 $\pm$ 4.44 & 6.06 $\pm$ 6.56 & 49.13 $\pm$ 1.91 & 46.07 $\pm$ 2.32 & 3.88 $\pm$ 7.34 \\
$\ell_\infty + \ell_2$ RA (Test) & \textbf{82.41 $\pm$ 16.00} & 1.68 $\pm$ 3.71 & 4.07 $\pm$ 5.57 & 48.90 $\pm$ 1.97 & 44.52 $\pm$ 3.83 & 4.66 $\pm$ 8.36 \\
\bottomrule \\
\end{tabular}}
\caption[]{Results on 45 binary classification problems taken from MNIST (e.g. "8" vs. "7"). For each binary problem we compute the accuracy using 100 test points, and each entry in the table shows the average and standard deviation of the accuracy over 45 problems.  On the training set, 1NN achieves 100\% robust accuracy while none of the adversarial training methods do so. On the test set, TRADES outperforms 1NN in terms of the perturbations on which it was trained ($\ell_\infty,\epsilon=0.3$) but not on different perturbations ($\ell_2,\epsilon=8.4$). In terms of robustness to both perturbations (bottom row), the simple 1NN outperforms adversarial training. }
\label{tab:mnist}
\end{table}

\begin{table}
\centerline{
\begin{tabular}{lcccccc}
\toprule
 & 1-NN & Normal Training & Early Stop  & TRADES  & TRADES  & TRADES \\
  & & & & Best & Recom & Both \\
\midrule
Clean Acc & 97.42 $\pm$ 4.37 & 98.84 $\pm$ 2.55 & 99.81 $\pm$ 0.49 & 97.02 $\pm$ 5.77 & 96.91 $\pm$ 4.32 & \textbf{99.89 $\pm$ 0.32} \\
$\ell_2$ Rob Acc (Train) & \textbf{92.77 $\pm$ 12.91} & 2.80 $\pm$ 5.77 & 0.00 $\pm$ 0.00 & 2.16 $\pm$ 6.44 & 0.05 $\pm$ 0.21 & 0.33 $\pm$ 1.54 \\
$\ell_2$ Rob Acc (Test) & \textbf{71.16 $\pm$ 29.72} & 4.00 $\pm$ 7.98 & 0.08 $\pm$ 0.27 & 2.07 $\pm$ 6.06 & 0.09 $\pm$ 0.47 & 0.13 $\pm$ 0.55 \\
$\ell_\infty$ Rob Acc (Train) & \textbf{96.01 $\pm$ 7.13} & 4.82 $\pm$ 8.49 & 12.53 $\pm$ 10.74 & 76.76 $\pm$ 24.60 & 63.80 $\pm$ 28.44 & 4.53 $\pm$ 8.53 \\
$\ell_\infty$ Rob Acc (Test) & 72.68 $\pm$ 26.01 & 5.56 $\pm$ 9.42 & 12.50 $\pm$ 10.82 & \textbf{79.13 $\pm$ 25.59} & 64.59 $\pm$ 29.23 & 3.44 $\pm$ 7.39 \\
$\ell_\infty + \ell_2$ RA (Train) & \textbf{94.39 $\pm$ 9.78} & 3.81 $\pm$ 6.74 & 5.76 $\pm$ 5.47 & 39.46 $\pm$ 13.17 & 31.92 $\pm$ 14.23 & 2.43 $\pm$ 4.49 \\
$\ell_\infty + \ell_2$ RA (Test) & \textbf{71.92 $\pm$ 27.68} & 4.78 $\pm$ 8.19 & 6.29 $\pm$ 5.43 & 40.60 $\pm$ 13.57 & 32.34 $\pm$ 14.65 & 1.79 $\pm$ 3.71 \\
\bottomrule\\
\end{tabular}}
\caption[]{Results on 45 binary classification problems taken from Fashion-MNIST (e.g. "dress" vs. "coat"). For each binary problem we compute the accuracy using 100 test points, and each entry in the table shows the average and standard deviation of the accuracy over 45 problems. On the training set, 1NN achieves over 90\% robust accuracy while none of the adversarial training methods do so. On the test set, TRADES outperforms 1NN in terms of the perturbations on which it was trained ($\ell_\infty,\epsilon=0.3$) but not on slightly different perturbations ($\ell_2,\epsilon=8.4$). In terms of robustness to both perturbations (bottom row), the simple 1NN outperforms adversarial training.}
\label{tab:fashion}
\end{table}

To evaluate the performance of 1NN and adversarial training on real datasets, we carried out an extensive series of experiments. In our first set of experiments, we used three standard datasets: CIFAR-10, MNIST and Fashion-MNIST. Rather than simply running the algorithms on the three datasets, we used these datasets to create 135 binary classification problems. For example, since CIFAR10 includes 10 possible categories, we can create 45 different binary classification problems that are defined by choosing two labels (e.g. "dog" vs "frog", "car" vs. "truck"). Similarly, we can create 45 different binary problems from MNIST and Fashion-MNIST giving us 135 problems in total. Our motivation in using binary classification problems was to compare the performance of the different algorithms on a large number of learning problems, rather than just three.

 As mentioned previously, defining robust accuracy requires defining a norm and a radius that define the $\epsilon$ ball around each point.  We evaluated all algorithms using both an $\ell_\infty$ norm and an $\ell_2$ norm.  In the case of CIFAR10, for the $\ell_\infty$ norm we used the commonly used radius of $\frac{8}{255}$: the adversary is allowed to change each pixel by at most $\frac{8}{255}$ while for the MNIST and Fashion-MNIST problems we allowed the adversary to change each pixel by at most $0.3$ (this is the default used for MNIST in the code and the paper of TRADES~\cite{zhang2019theoretically}).  For the $\ell_2$ norm, we chose $\epsilon$ as the norm of a perturbation in which all pixels have been perturbed by the maximal amount allowed to the $\ell_\infty$ attacker. This gave a radius of $1.74$ for the CIFAR10 problems and $8.4$ for the MNIST problems. For CIFAR10, we verified visually that these values of $\epsilon$ were sufficiently small that the category does not change within an $\epsilon$ ball around a training example. Figure~\ref{fig:three_figures} shows some examples. It can be seen that the images with an allowed $\ell_2,\epsilon=1.74$ perturbation (middle row) are almost indistinguishable from the original images (top row). 
 
For each of the 135 problems, we evaluated the following algorithms:
\begin{enumerate}
\item The 1NN classifier. Note that the 1NN classifier always uses the $\ell_2$ norm to compute distances.
\item TRADES with recommended hyperparameters based on a ResNet18 architecture (as in \cite{pang2020bag}). 
\item TRADES with the best problem-specific hyperparameters. For each binary problem, We tried a number of possible combinations of the hyperparameters and chose the one that gave highest $\ell_\infty$ adversarial accuracy. For the CIFAR10 problems we tried 600 different hyperparameter settings while for the MNIST and Fashion-MNIST we tried a smaller number of possible settings.
\item Standard training of the same ResNet18 architecture. 
\item Standard training with early stopping (20 epochs). 
\item A version of TRADES which attempts to be robust to both $\ell_\infty$ and $\ell_2$ attacks. In this version, at each iteration we find two  adversarial examples $x'$: one is constrained to be within the $\ell_2$ ball around the original example and the second is constrained to be within the $\ell_\infty$  ball.  We refer to this algorithm as "TRADES-Both".
\end{enumerate}

For the 1NN classifier we can calculate the robust accuracy exactly but 
to evaluate the CNN's robust accuracy we need to perform optimization.  We primarily used the 
APGD-CE\cite{croce2020reliable} attack which is the recommended attack in~\cite{croce2021robustbench}. In initial experiments, we also tried the attacks used in CleverHans\cite{papernot2018cleverhans} and Foolbox\cite{rauber2017foolboxnative}. However, our initial experiments suggested that APGD-CE was either equally effective or superior in terms of finding adversarial examples, and the evaluation was faster.

Table~\ref{tab:running_times} shows the training and inference times for each relevant method. For the 1NN we used the FAISS package~\cite{johnson2019billion} which builds an index over the training example in order to facilitate fast nearest neighbor search and we use that as the equivalent of "training time". Clearly 1NN is orders of magnitude faster.

The results for the 45 problems from CIFAR10 are shown in table~\ref{tab:cifar10}. For each binary problem we compute the accuracy using
all test points, and each entry in the table shows the average and standard
deviation of the accuracy over 45 problems.
The results are consistent with our theory and show that:
\begin{itemize}
\item For all 45 problems, 1NN achieved 100\% robust accuracy on the train set for both $\ell_\infty$ and $\ell_2$ norms. In contrast, with the recommended hyperparameters for TRADES, the robust accuracy on the train was always far from 100\%: with the $\ell_2$ norm the robust accuracy on the training set was below 50\% for all 45 problems. Recall that the robust accuracy on the training set is guaranteed to be 100\% for 1NN based on our theory (provided the classes are well separated). Recall also that TRADES explicitly tries to maximize the robust accuracy on the training set with $\ell_\infty$ norm, and even when we measure robust accuracy using the same $\epsilon$ balls that were used during training, the average robust accuracy (over the 45 problems) is around 61 \%, far below that of the simple 1NN.
\item The performance of TRADES is strongly dependent on hyperparameters. When we use the same hyperparameters for all 45 problems (set to their recommended setting for CIFAR10), the robust accuracy with $\ell_\infty$ norm on the test set is not any better than it is for the simple 1NN classifier (around 58\% for both algorithms). Thus even though TRADES is orders of magnitude slower than 1NN, there is no significant improvement in robust accuracy even when we measure it using the same perturbations that were used in training. 
\item When we allow TRADES to choose different hyperparameters for each of the 45 problems, the algorithm does indeed significantly improve robust accuracy on the test set using the $\ell_\infty$ norm and it outperforms the simple 1NN (73\% on average over the 45 problems, as opposed to 58\% for 1NN). 
\item Even when we allow TRADES to choose different hyperparameters for each of the 45 problems, it does not learn a classifier that is robust to perturbations that are only slightly different from those that it saw during training. The robust accuracy on the test set with $\ell_2$ norm is around $27\% \pm 13$ indicating that for almost all 45 problems the robust accuracy with $\ell_2$ norm for TRADES was below 50\%.   In contrast, the simple 1NN classifier is guaranteed by our theorem to be robust to any small perturbation for test examples that are confidently classified, and indeed the robust accuracy with $\ell_2$ norm is much better than that of TRADES when averaged over all 45 problems.
\end{itemize}

The results for MNIST and Fashion-MNIST are given in tables~\ref{tab:mnist},\ref{tab:fashion} and show a similar trend. 
Again, we find that TRADES is highly sensitive to hyperparameters, fails to be 100\% robust on the training data, and is inferior to the simple 1NN on test data when two types of perturbations are considered.
Overall, we believe that that these experiments clearly show an advantage of simple 1NN over adversarial training in terms of robustness to varied perturbations. Furthermore, they also highlight what we see as the main advantage of 1NN over adversarial training: while adversarial training increases robustness to the particular perturbations on which it was trained, 1NN is robust to any small perturbation under reasonable assumptions.

In a second set of experiments we 
compared the performance of 69 state-of-the-art pretrained models (all of the available CIFAR10, $\ell_\infty$ models at the RobustBench site~\cite{croce2021robustbench})  to that of the simple 1NN classifier. These pretrained models include a range of architectures and algorithms that are built on the basis of adversarial training but also include methods that take advantage of additional training data (e.g. one of the current leaders uses 50 million images generated by a diffusion model in addition to the CIFAR10 images~\cite{pmlr-v202-wang23ad}).

As seen in figure~\ref{fig:overall_label} the behaviour is similar to what we saw in the binary experiments: the simple 1NN is better than almost all the pretrained models in terms of $\ell_2$ robust accuracy and in terms of $\ell_{\infty}$ accuracy on the training set. Thus even when the models are trained with millions of additional images and adversarially generated examples, they still fail to be robust for small perturbations that are only slightly different from those seen during training. In contrast, the 1NN classifier is robust to {\em any} small perturbation as predicted by our theory. 

As a visual comparison of the 1NN approach and TRADES, figure~\ref{fig:1NN-examples} shows the adversarial examples needed to fool the 1NN classifier for the same training examples shown in figure~\ref{fig:three_figures}. Whereas TRADES is easily fooled with small perturbations that are only slightly different from those  that it saw during training, the 1NN model cannot be fooled with small perturbations. Rather, fooling 1NN requires much larger changes and almost all pixels should be changed by over $\frac{8}{255}$.

\begin{figure}[htbp]
  \centering
  
  \begin{minipage}{0.4\columnwidth}

    \includegraphics[width=\linewidth]{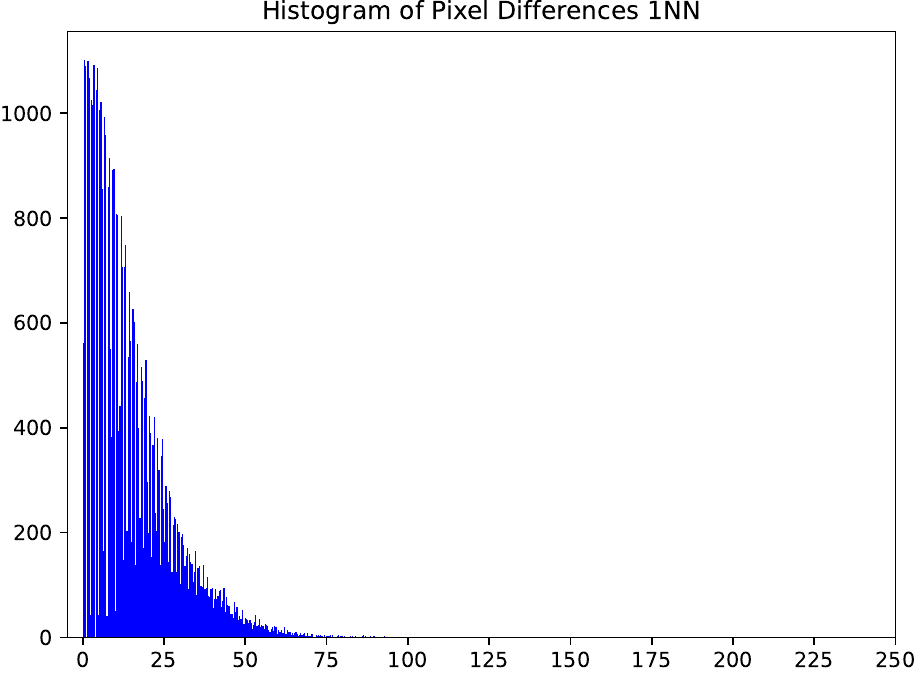}
  \end{minipage}
  
  % Second row with one figure
  \begin{minipage}{0.8\columnwidth}
  \includegraphics[width=\linewidth]{Figs/1nn_robu.pdf}
    \end{minipage}
 % \begin{minipage}{0.8\columnwidth}
 % \includegraphics[width=\linewidth]{Figs/1nn_robu.pdf}
 %   \end{minipage}

  \caption[]{Adversarial images for 1NN for the same CIFAR10 training images shown in~\protect{\ref{fig:three_figures}}.  First row is the original image, second row is the adversary image and last line is the normalized difference. As can be seen in the histogram of pixel differences on top, to fool the 1NN nearest neighbor classifier, almost all pixels should be changed by over $8/255$.} 
  \label{fig:1NN-examples}
\end{figure}

\begin{figure}
\centerline{
\begin{tabular}{cccc}
\includegraphics[width=0.25\linewidth]{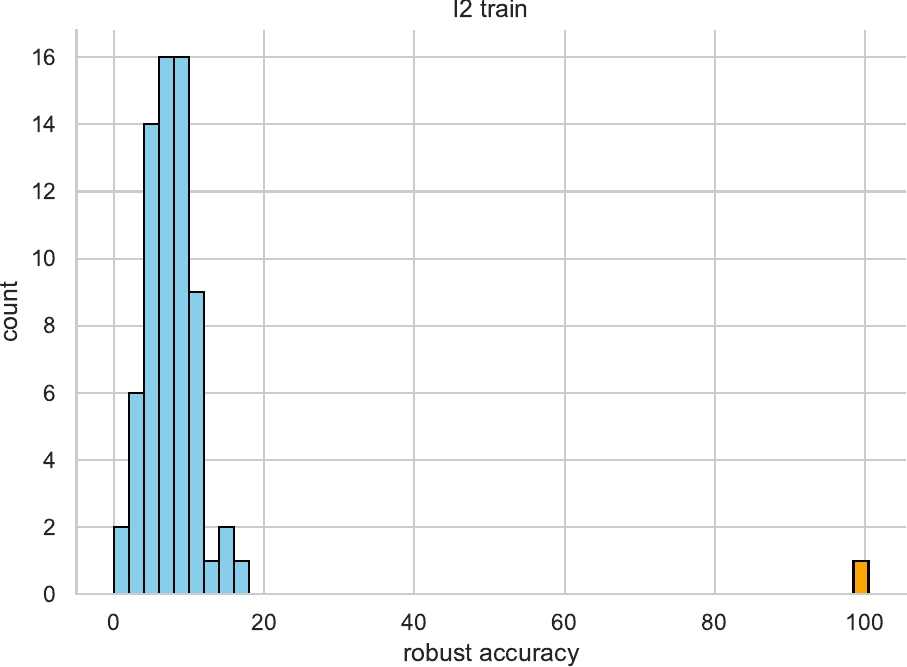} & 
\includegraphics[width=0.25\linewidth]{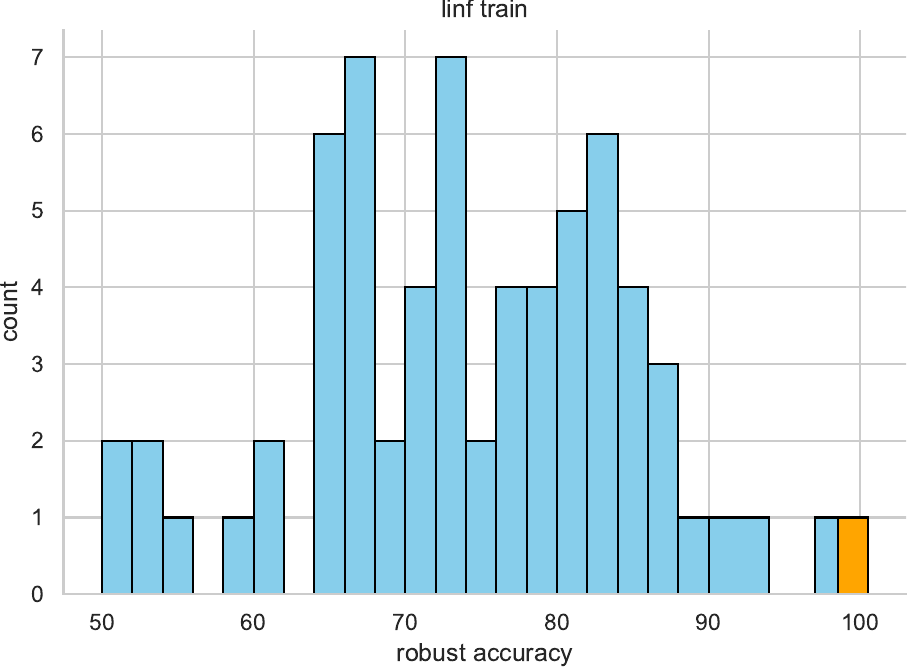} & 
 \includegraphics[width=0.25\linewidth]{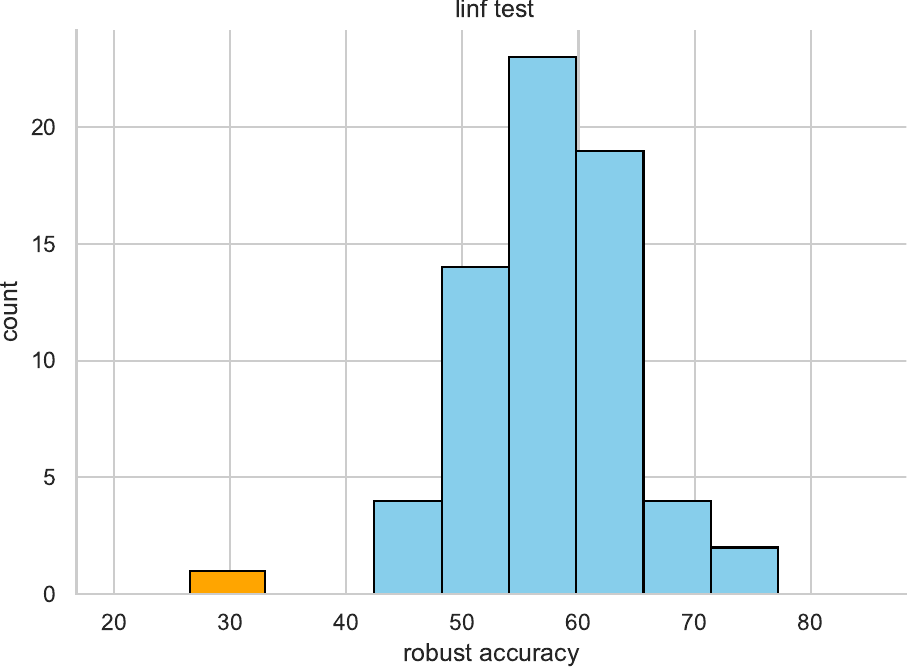} &
  \includegraphics[width=0.25\linewidth]{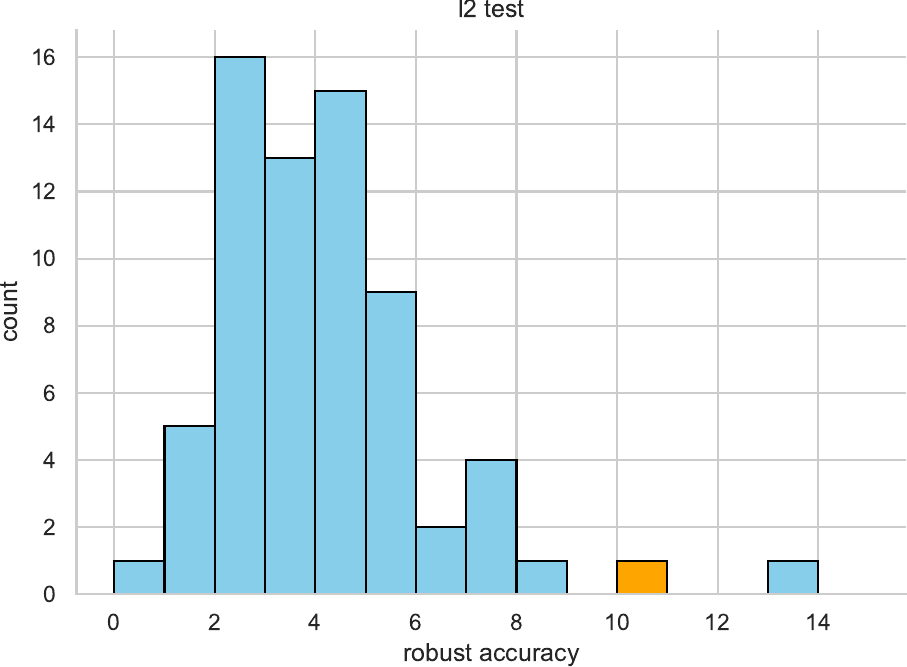} \\
  $\ell_2$ RA train & 
   $\ell_\infty$ RA train &
  $\ell_\infty$ RA test& $\ell_2$ RA test
\end{tabular}
}
\caption[]{Comparison of 1NN (yellow) and 69 pretrained models for CIFAR10 downloaded from the current $\ell_{\infty}$ robustness \href{https://robustbench.github}{leaderboard} (blue). All models are trained using adversarial training using $\ell_\infty$ attacks and indeed they increase the adversarial accuracy using this measure significantly. However, when these models are evaluated using $\ell_2$ attacks, they are inferior to the 1NN classifier. }
  \label{fig:pretrained}
\end{figure}

%\begin{table}[htbp]
%\centering
%\caption{robust accuracy values pretrained TRADES vs 1NN}
%\label{tab:robust_acc_pretrained}
%\begin{tabular}{@{}lcccc@{}}
%\toprule
%Algorithm & {\(l_{\infty}\) on train} & {\(l_{\infty}\) on test} & {\(l_{2}\) on train} & {\(l_{2}\) on test} \\ 
%\midrule
%Pretrained TRADES & 67.8 & 51.2 & 4.4 & 4.2 \\
%1NN & 100 & 26.77 & 99.988 & 10.94 \\
%\bottomrule
%\end{tabular}
%\end{table}

\section{Discussion}
Despite the large amount of attention that the research community has invested in adversarial examples, the problem is still unsolved and even state of the art methods based on adversarial training can be easily fooled with tiny perturbations of the input. In this paper we have compared adversarial training to the simple 1NN classifier. We have analyzed the 1NN classifier and shown proven that under reasonable assumptions, the 1NN will have 100\% adversarial accuracy on the training set and as the number of training examples goes to infinity, it will also attain 100\% adversarial accuracy on the test set for any $p$ norm. In extensive experiments on 135 different image classification problems, the simple 1NN classifier outperforms adversarial training in terms of robust accuracy when both $\ell_2$ and $\ell_\infty$ norms are used. 

We see the simple 1NN classifier as an example of a simple algorithm that provides robustness to any small perturbation and therefore serves as an alternative to adversarial training. We do not, however, advocate using the 1NN classifier in pixel space for image classification since obviously the performance for finite training data is far from state-of-the-art. We believe that there are more powerful algorithms that are based on 1NN in other feature spaces that can improve accuracy and still guarantee robustness and this is a promising direction for future research.

%\section*{Acknowledgments}
%This was was supported in part by......

%Bibliography
\bibliographystyle{unsrt}  
\bibliography{references}

\section*{Appendix}
Detailed of proof of theorem 1.
 
 Denote by $d(x+v,A)$ the distance to the correct class and
$d(x+v,B)$ the distance to the other class. We want to show that
$d(x+v,A)<d(x+v,B)$ for any $v$ such that $\|v\|<\delta/2$. Since $d(x,A)=d(x,x_0)$ we have that:
\begin{eqnarray}
d(x+v,A) &=& \min_{i \in A} d(x+v,x_i) \\
& \leq & \min_{i \in A} d(x,x_i) + \frac{\delta}{2}  \\
& =& d(x,x_0) + \frac{\delta}{2}
\end{eqnarray}

and similarly:
\begin{eqnarray}
d(x+v,B) &=& \min_{i \in B} d(x+v,x_i) \\
& \geq & \min_{i \in B} d(x,x_i) - \frac{\delta}{2}  \\
& =& d(x,x_1) - \frac{\delta}{2}
\end{eqnarray}
and since the example is $\delta$ confident
$d(x,x_1)>d(x,x_0)+ \delta$ so that $d(x+v,A)<d(x+v,B)$.

\end{document}